\newcommand{\latinphrase}[1]{\textit{#1}}
\newcommand{\ie}{\latinphrase{i.e.}\xspace}
\newcommand{\eg}{\latinphrase{e.g.}\xspace}
\newcommand{\oursys}{\textit{\textrm{Dopamine}}\xspace}
\newcommand{\user}{patient\xspace}
\newcommand{\users}{patients\xspace}
\newcommand{\agent}{hospital\xspace}
\newcommand{\agents}{hospitals\xspace}
\newcommand{\server}{server\xspace}
\newcommand{\norm}[1]{\left\lVert#1\right\rVert_2}
\newcommand{\SOne}{{\textsf{C}}\xspace} 
\newcommand{\STwo}{{ \textsf{CDP}}\xspace}
\newcommand{\SThree}{{ \textsf{F}}\xspace}
\newcommand{\SFour}{{\textsf{FPDP}}\xspace}
\newtheorem{definition}{Definition}
\newtheorem{lem}{Lemma}
\title{Dopamine: Differentially Private Federated Learning on Medical Data}
\author{Mohammad Malekzadeh, Burak Hasircioglu, Nitish Mital, Kunal Katarya,\\ Mehmet Emre Ozfatura, Deniz Gündüz\thanks{\scriptsize{This work was funded by the European Research Council~(ERC) through Starting Grant BEACON~(no. 677854) and by the UK EPSRC~(grant no. EP/T023600/1).}}\\
}
\begin{document}

\begin{textblock}{10}(1,1)
\noindent The Second AAAI Workshop on Privacy-Preserving Artificial Intelligence (PPAI-21)
\end{textblock}

\maketitle

\begin{abstract} 
While rich medical datasets are hosted in \agents distributed across the world, concerns on \users' privacy is a barrier against using such data to train deep neural networks~(DNNs) for medical diagnostics.  We propose \oursys, a system to train DNNs on distributed datasets, which employs federated learning~(FL) with differentially-private stochastic gradient descent~(DPSGD), and, in combination with secure aggregation, can establish a better trade-off between differential privacy~(DP) guarantee and DNN's accuracy than other approaches. Results on a diabetic retinopathy~(DR) task show that \oursys provides a DP guarantee close to the centralized training counterpart, while achieving a better classification accuracy than FL with parallel DP where DPSGD is applied without  coordination. Code is available  at~{\url{https://github.com/ipc-lab/private-ml-for-health}}. 
\end{abstract} 

\section{Introduction}
Deep neural networks facilitate disease recognition from medical data, particularly for \users without immediate access to doctors. Medical images are processed with DNNs for faster diagnosis of skin disease~\cite{skincancer}, lung cancer~\cite{lungcancer}, or diabetic retinopathy~(DR)~\cite{10.1001/jama.2016.17216}. However, the memorization capacity of DNNs can be exploited by adversaries for reconstruction of a \user's data~\cite{carlini2019secret}, or the inference of a \user's participation in the training dataset~\cite{shokri2017membership, dwork2017exposed}. Due to such privacy risks and legal restrictions, medical data can rarely be found in one centralized dataset; thus,
there has been a surge of interest in {\em privacy} and {\em utility} preserving training on distributed medical datasets~\cite{kaissis2020secure}.

{\em Federated learning}~(FL)~\cite{mcmahan2017communication} trains a DNN where \agents collaborate with a central \server in training a global model on their local datasets. At each round, the \server sends the current model to each \agent, then \agents update the model on their private datasets and send the model back to the \server. The \agents' updates are  susceptible to information leakage about the \users' data due to model over-fitting to training data~\cite{carlini2019secret}. {\em Differential privacy}~(DP)~\cite{dwork2014algorithmic} limits an adversary's certainty in inferring a \user's presence in the training dataset.  Before optimizing the DNN, Gaussian random noise is added to the computed gradients on the \users' data to achieve {differentially-private stochastic gradient descent}~(DPSGD)~\cite{abadi2016deep}. 

We propose \oursys, a customization of DPSGD for FL, which, in combination with secure aggregation by homomorphic encryption, can establish a better privacy-utility trade-off than the existing approaches, as elaborated in Section~\ref{sec_evaluation}. Experimental results on the DR dataset~\cite{retina.1}, using SqueezeNet~\cite{iandola2016squeezenet} as a benchmark DNN, show that \oursys can achieve a DP bound close to the centralized training counterpart, while achieving better classification accuracy than FL with parallel DP where the \agents apply typical DPSGD on their sides without any specific coordination. We provide theoretical analysis for the guaranteed privacy by \oursys, and discuss the differences between \oursys and the seminal work proposed by~\cite{truex2019hybrid}: our solution allows to properly keep track of the privacy loss at each round as well as taking advantage of the {\em momentum}~\cite{qian1999momentum} in FL-based DPSGD, which improves the DNN's accuracy. 

The main contribution of this paper is the design and implementation of FL on medical images while satisfying record-level DP. While previous works on medical datasets~(as discussed in Appendix,  Section~\ref{sec_background}) either do not guarantee a formal notion of privacy, \eg \cite{li2020multi}, or apply weaker notion of DP, \eg parameter-level DP~\cite{li2019privacy}, to the best of our knowledge, \oursys is the first system that implements FL-based DPSGD that guarantees record-level DP for a dataset of medical images. Finally, we publish a simulation environment to facilitate further research on privacy-preserving training on distributed medical image datasets.

\section{\oursys's Methodology}~\label{sec:our system}

{\bf Problem Formulation\footnote{\scriptsize{As notation, we use lower-case {\it italic}, \eg $x$, for variables; upper-case {\it italic}, \eg $X$, for constants; bold font, \eg $\mathbf{X}$, for vectors, matrices, and tensors; blackboard font, \eg $\mathbb{X}$, for sets; and calligraphic font, \eg $\mathcal{X}$, for functions and algorithms. 
}}} Let each \agent $k \in \{1,\ldots,K\}$ own a dataset, $\mathbb{D}_k$, with an unknown number of \users, where each \user $i$ participates with a labeled data $(\mathbf{x}_i, y_i)$. The global \server owns a validation dataset $\mathbb{D}_{G}$, and we consider a DNN's {\em utility} as its prediction accuracy on $\mathbb{D}_{G}$. The goal is to collaboratively train a DNN  while satisfying record-level DP. We assume a {\em threat model} where \users only trust their local \agent, and \agents are non-malicious and non-colluding. The global \server is honest but curious. Finally, \agents and the \server do not trust any other third parties. We assume the \user's {\em privacy}, defined by $(\epsilon, \delta)$, is a bound on the record-level DP loss. The \agents aim to ensure their \users a computational DP against the \server during training, and an information-theoretical DP against the \server and any other third parties after training. It is called computational DP as a cryptosystem is only robust against computationally-bounded adversaries. \oursys assumes that adversaries are computationally bounded during training, which is a typical assumption, but after training they can be computationally unbounded\footnote{\scriptsize{Background 
materials are provided in Appendix, Section~\ref{sec_background}.}}.     

\subsection{\oursys's Training}\label{sec_training}
The training procedure is given in Algorithm \ref{alg:oursys}, where we perform federated SGD among $K$ \agents. At each round $t$, each \agent $k$ samples a batch of samples from its local dataset, $\mathbb{D}_k^t \subset \mathbb{D}_k$, where each local sample is chosen independently and with probability $q$. Due to independent sampling, the batch size is not fixed, and is a binomial random variable with parameters $q$ and $|\mathbb{D}_k|$. Let $|\mathbb{D}_k^t|$ and $\eta$ denote the batch size and the learning rate, respectively. Let $C$ denote the maximum value of the $L_2$-norm of per-sample gradients. If a per-sample gradient has a norm greater than this, then its norm is clipped to $C$~\cite{abadi2016deep}. 

\begin{algorithm}[t!]
\caption{\oursys's Training}\label{alg:oursys}
\begin{algorithmic}[1]
\STATE {\bfseries Input:} $K$: number of \agents, $\mathbb{D}$: distributed dataset, $\mathbf{w}$: model's trainable parameters, $\mathcal{L}(\cdot, \cdot)$: loss function, $q$: sampling probability, $\sigma$: noise scale, $C$: gradient norm bound, $\eta$: learning rate, $\beta$: momentum, $T$: number of rounds, $(\epsilon, \delta)$: bounds on record-level DP loss.
\STATE {\bfseries Output:} $\mathbf{w}_{G}$: optimized global model.
\STATE $\mathbf{w}^{0}_{G} =$ random initialization.
\STATE $\hat{\epsilon} = 0$ 
\FOR{$t: 1,\ldots, T$} 
    \FOR{$k: 1, \ldots, K$} 
        \STATE $\mathbb{D}^{t}_k = Sampling(\mathbb{D}_k)$ \quad // by uniformly sampling each item in $\mathbb{D}_k$ independently with probability $q$.
        \FOR{$ \mathbf{x}_i \in \mathbb{D}^{t}_k$}
            \STATE $\mathbf{g}^t(\mathbf{x}_i) = \nabla_{\mathbf{w}}\mathcal{L}(\mathbf{w}^{t-1}_{G},\mathbf{x}_i)$
            \STATE $\bar{\mathbf{g}}^t(\mathbf{x}_i) = \mathbf{g}^t(\mathbf{x}_i)/\max{\big(1, \frac{||\mathbf{g}^t(\mathbf{x}_i)||_2}{C}\big)}$ 
        \ENDFOR
        \STATE$\widetilde{\mathbf{g}}^t_k = \frac{1}{|\mathbb{D}^{t}_k|} \big( \sum_{\mathbf{x}_i \in \mathbb{D}^{t}_k} \bar{\mathbf{g}}^t(\mathbf{x}_i) + \mathcal{N}(0,\frac{\sigma^2\cdot C^2\cdot \mathbf{I}}{K}) \big)$
        \STATE $\hat{\mathbf{g}}^t_k = \widetilde{\mathbf{g}}^t_k + \beta \hat{\mathbf{g}}^{t-1}_k$ \quad //$\hat{\mathbf{g}}^{0}_k = 0$  
        \STATE $\mathbf{w}^{t}_{k} = \mathbf{w}^{t-1}_{G}-\eta \hat{\mathbf{g}}^t_k$ 
    \ENDFOR
    \STATE $\hat{\epsilon} = CalculatePrivacyLoss(\delta, q, \sigma, t)$ // by Moments Accountant \cite{abadi2016deep}  
    \IF{ $\hat{\epsilon} > \epsilon$} 
        \STATE return ${\mathbf{w}}^{t-1}_{G}$
    \ENDIF
    \STATE $\mathbf{w}^{t}_{G}= \frac{1}{K}\big( SecureAggregation(\sum_{k}  \mathbf{w}^{t}_{k}$) \big)
    \STATE $Broadcast(\mathbf{w}^{t}_{G})$
\ENDFOR
\end{algorithmic}
\end{algorithm}

As \agents do not trust the \server, a potential solution is, for each \agent, to add a large amount of noise to the model updates, $\mathbf{w}^{t}_{k}$, to keep them differentially private from the \server. However, adding a large amount of noise has the undesirable effect of decreasing the accuracy of the trained model. A better solution is to employ secure aggregation of the model updates, which prevents the \server from discovering the \agent's model updates. Since the model updates are now hidden from the \server, the \agents can add less amount of noise to keep their model updates differentially private from the \server.
\begin{lem}
\label{secure_agg_lemma}
In Algorithm~\ref{alg:oursys}, if each \agent $k$ adds Gaussian noise $n_k\sim\mathcal{N}\Big(\mu=0,\sigma^2=\frac{2\ln{(1.25/\delta)}C^2}{\epsilon^2 |\mathbb{D}_k^t|^2 K}\Big)$ to the average of (clipped) gradients, then $\mathbf{w}^{t}_{G}$ is $(\epsilon,\delta)$-DP against the \server, and $\big(\epsilon \sqrt{{K}/{K-1}},\delta\big)$-DP against any \agent. 
\end{lem}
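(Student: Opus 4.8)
The plan is to reduce both claims to the standard Gaussian mechanism by (i) bounding the $L_2$ sensitivity of the quantity the \server (resp. a peer \agent) actually observes, and (ii) arguing that the deterministic affine map from the noisy averaged gradients to $\mathbf{w}^t_G$ is post-processing, which cannot weaken DP. First I would fix a pair of record-neighboring datasets that differ in a single \user's sample; this record lives in exactly one \agent, say $k^\star$. Because every per-sample gradient is clipped so that $\norm{\bar{\mathbf{g}}^t(\mathbf{x}_i)} \le C$, adding or removing one record changes $\sum_{\mathbf{x}_i \in \mathbb{D}^t_{k^\star}} \bar{\mathbf{g}}^t(\mathbf{x}_i)$ by at most $C$ in $L_2$ norm, so the averaged contribution has sensitivity $C/|\mathbb{D}^t_{k^\star}|$. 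Recall that the Gaussian mechanism releases a query of sensitivity $\Delta$ with $(\epsilon,\delta)$-DP once the additive Gaussian noise has variance at least $2\ln(1.25/\delta)\Delta^2/\epsilon^2$.

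For the guarantee against the \server, the key observation is that \textit{SecureAggregation} reveals only the sum $\sum_k \mathbf{w}^t_k$ (equivalently $\sum_k \widetilde{\mathbf{g}}^t_k$, after the known affine map involving $\mathbf{w}^{t-1}_G$, $\eta$, and the momentum term), and never the individual updates. Hence the randomness shielding the single changed record is the \emph{sum} of the $K$ independent Gaussians $n_1,\ldots,n_K$. Since each $n_k$ has variance $\sigma_n^2 = 2\ln(1.25/\delta)C^2/(\epsilon^2 |\mathbb{D}^t_k|^2 K)$, the aggregate noise has variance $K\sigma_n^2 = 2\ln(1.25/\delta)C^2/(\epsilon^2 |\mathbb{D}^t_k|^2)$, which is exactly the Gaussian-mechanism threshold for sensitivity $C/|\mathbb{D}^t_{k^\star}|$ (the common factor $\eta/K$ scales signal and noise identically and cancels). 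Invoking the Gaussian mechanism and then post-processing yields $(\epsilon,\delta)$-DP for $\mathbf{w}^t_G$.

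For the guarantee against a single honest-but-curious \agent $j$, I would use that $j$ knows its own additive noise $n_j$ and can subtract it from the revealed aggregate. The protecting randomness is then only the sum of the remaining $K-1$ Gaussians, with variance $(K-1)\sigma_n^2 = \tfrac{K-1}{K}\cdot 2\ln(1.25/\delta)C^2/(\epsilon^2|\mathbb{D}^t_k|^2)$. Because the Gaussian-mechanism $\epsilon$ scales inversely with the noise standard deviation, shrinking the variance by the factor $(K-1)/K$ inflates the privacy parameter by $\sqrt{K/(K-1)}$, giving $\big(\epsilon\sqrt{K/(K-1)},\delta\big)$-DP; the same sensitivity bound and post-processing step close the argument.

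The main obstacle I anticipate is making the security reduction rigorous rather than heuristic: under the honest-but-curious model I must argue that the view of the \server (resp. \agent $j$) is computationally indistinguishable from observing only the aggregate plus (resp. minus its own) Gaussian noise, so that DP of the idealized aggregate transfers to the real homomorphically-encrypted protocol — this is precisely where the \emph{computational} qualifier on the privacy guarantee enters. Secondary subtleties are that $|\mathbb{D}^t_k|$ is a binomial random variable rather than a fixed batch size, so the sensitivity and noise calibration must either be taken conditionally on the realized batch or assume a common nominal batch size across \agents; and that the momentum recursion $\hat{\mathbf{g}}^t_k = \widetilde{\mathbf{g}}^t_k + \beta\hat{\mathbf{g}}^{t-1}_k$ behaves as benign post-processing only for a \emph{single} round's release, with the cumulative loss across the $T$ rounds deferred to the Moments Accountant composition rather than handled here.
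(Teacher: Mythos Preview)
Your proposal is correct and follows essentially the same approach as the paper: bound the $L_2$ sensitivity of the aggregated update to a single record (via the clipping bound $C$, the per-\agent averaging, and the $1/K$ averaging at the \server), then match it against the variance of the summed Gaussian noise to invoke the Gaussian mechanism, with the peer-\agent case handled by subtracting one noise share. The only presentational difference is that the paper carries the $\eta/K$ factor through the sensitivity and noise calculations explicitly whereas you (equivalently, and arguably more cleanly) absorb it into a post-processing step; your additional caveats about the computational-DP reduction, the random batch size, and momentum-as-post-processing are more careful than the paper's own proof, which glosses over these points.
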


\begin{proof} 
We compute the effect of the presence or the absence of a single sample at \agent $k$ on $\mathbf{w}_{G}^{t}$.  Let $\mathbf{w}_k^t-\mathbf{w}_G^{t-1}$ be the (noiseless) model update of \agent $k$ at round $t$. Due to secure aggregation, the \server and other \agents receive~(unencrypted) $\mathbf{w}_G^{t}-\mathbf{w}_G^{t-1} = \frac{1}{K} \sum_{k=1}^{K}
\mathbf{w}_k^t - \mathbf{w}_G^{t-1}$. Since $\mathbf{w}_G^{t-1}$ is already known to all, it is sufficient to only consider $\mathbf{w}_k^t$. We define $\mathbf{w'}_k^t$ as the local model if $\mathbb{D'}_k^t$ is used by \agent $k$ instead of $\mathbb{D}_k^t$, where $\mathbb{D}_k^t$ and $\mathbb{D'}_k^t$ only differ in one sample.
Since $L_2$-norm of each per-sample gradient is bounded by $C$ and the \agent $k$ averages all per-sample gradients of $|\mathbb{D}_k^t|$ samples, we have $$\max_{\mathbb{D}_k^t,\mathbb{D'}_k^t}\norm{\mathbf{w}_k^t-\mathbf{w'}_k^t}={\eta C}/{|\mathbb{D}_k^t|},$$ for all $k \in [K]$. As adding or removing one sample at \agent $k$ only changes the gradients of that \agent, we have $$\max_{\mathbb{D}_k^t,\mathbb{D'}_k^t}\norm{\mathbf{w}_G^{t}-\mathbf{w'}_G^{t}}=\frac{1}{K}\max_{\mathbb{D}_k^t,\mathbb{D'}_k^t}\norm{\mathbf{w}_k^t-\mathbf{w'}_k^t}=\frac{\eta C}{|\mathbb{D}_k^t|K},$$ for all $k \in [K]$, where $\mathbf{w'}_G^{t}$ is the global model when $\mathbb{D'}_k^t$ is used.
Thus, to guarantee a record-level $(\epsilon,\delta)$-DP, the variance of effective noise added to each model update must be 
$$\sigma^2_\text{effective} = \frac{2\ln(1.25/\delta)\eta^2 C^2}{\epsilon^2 K^2 |\mathbb{D}_k^t|^2}.$$
If hospitals use noise $n_k$ in Lemma~\ref{secure_agg_lemma}, then the effective noise added to the global model update, $\mathbf{w}_G^{t}-\mathbf{w}_G^{t-1}$, is $\frac{\eta}{K}\sum_{k=1}^K n_k$, with a variance $\frac{\eta \sigma^2}{K}$. When the expression for $\sigma^2$ in Lemma~\ref{secure_agg_lemma} is put in place, the variance of the effective noise is $\frac{2\ln(1.25/\delta)\eta^2 C^2}{\epsilon^2|\mathbb{D}_k^t|^2 K^2} =\sigma^2_\text{effective}$, which proves \mbox{$(\epsilon,\delta)$-DP} against the \server. From a \agent's perspective, since it knows its share in the effective noise, the variance of the noise after canceling its share is $\frac{2\ln(1.25/\delta)\eta^2 C^2(K-1)}{\epsilon^2|\mathbb{D}_k^t|^2 K^3}$, which results in $(\epsilon \sqrt{K/(K-1)},\delta)$-DP against \agents.
\end{proof}

\subsection{Single vs. Multiple Local Updates}
An algorithm, similar to our Algorithm~\ref{alg:oursys}, is previously proposed by~\cite{truex2019hybrid}~(Section 5.2), where they allow each party to carry out multiple local SGD steps before sharing the updated model with the \server. We argue that this approach has an important drawback: it may violate the critical assumption in the {\em moments accountant}~(MA) procedure~\cite{abadi2016deep} for tracking the privacy loss at each round, and for the same reason, it does not allow using momentum for SGD~(line 13 in our Algorithm~\ref{alg:oursys}), which helps to improve the model's accuracy.  The MA is introduced by~\cite{abadi2016deep} for keeping track of a bound on the moments of the {\em privacy loss random variable}, in the sense of DP, that depends on the random noise added to the algorithm's output. In the proof provided by~\cite{abadi2016deep}, the MA is updated by sequentially applying DP mechanism $\mathcal{M}$. Particularly, \cite{abadi2016deep} models the system by letting the  mechanism at round $t$, $\mathcal{M}^t$, to have access to the output of all the previous mechanisms $\mathcal{M}^{t-1}, \mathcal{M}^{t-2}, ..., \mathcal{M}^{1}$, as the auxiliary input. Hence, to properly calculate the privacy loss at round $t$, one needs to make sure that the amount of privacy loss in the previous $t-1$ rounds are properly bounded by adding the proper amount of noise. 

In Algorithm~\ref{alg:oursys}, we allow each \agent to add a Gaussian noise of variance $\sigma^2/{K}$, at each round, and then share the updated model to securely perform FedSGD~\cite{mcmahan2017communication}. Thus, before running any other computation on the DNN, we add  the required amount of effective noise that is needed by MA to calculate the privacy loss $\epsilon$~(see Lemma~\ref{secure_agg_lemma}).
Moreover, the effect of each record on the local momentum is the same as its effect on the aggregated model, thus, due to the post-processing property, using momentums does not incur a privacy cost~(see Appendix~\ref{sec:momnt}).
However, \cite{truex2019hybrid} allow the local model to be updated for multiple~(\eg 100) local iterations while at each iteration the DNN model that is used as the input is the output of the previous local iterations, and not the aggregated model. Thus, the sequential property of the proof in MA may be violated. Moreover, when running multiple local iterations, it is not clear how one can ensure a proper sampling similar to what is needed in the central DPSGD.

Fixing these issues, our algorithm allows to take advantage of both privacy loss tracking with the MA and using SGD with momentums. We emphasize that although our algorithm needs more communications due to sharing model updates after every iteration, this cost can be tolerated considering the importance of the other two aspects in medical data processing: model accuracy and privacy guarantee.

\section{Evaluation}\label{sec_evaluation}
\subsection{Dataset}\label{sec_datasets}
 Diabetic retinopathy is an eye condition that can cause vision loss in diabetic patients. It is diagnosed by examining scans of the blood vessels of a patient's retina - thereby making this an image classification task. A DR dataset~\cite{retina.1}, available at \url{https://www.kaggle.com/c/aptos2019-blindness-detection/data}, exists to solve this task. The problem is to classify the images of \users' retina into five categories: {\em No DR}, {\em Mild DR}, {\em Moderate DR}, {\em Severe DR}, and {\em Proliferative DR}. The dataset consists of 2931 variable-sized images for training, and 731 imgaes for testing. To deal with the variation in image dimensions, all images were resized into the same dimensions of $224\times224$.

\subsection{DNN Architecture}

We use SqueezeNet~\cite{iandola2016squeezenet}: a convolutional neural network that has 50x fewer parameters than the famous AlexNet, but is shown to achieve the same level of accuracy on the ImageNet dataset. The size of the model is a very important factor for both training and inference, which is one of the main motivations for using SqueezeNet. Moreover, SqueezeNet can achieve a classification accuracy of about 80\% on the DR dataset while the current best accuracy reported on the Kaggle's competition\footnote{\scriptsize{\url{https://www.kaggle.com/c/aptos2019-blindness-detection/notebooks}}} on this dataset is about 83\% using a much larger DNN, EfficientNet-B3, that has 18x more parameters than SqueezeNet\footnote{\scriptsize{The Pytorch implementation of is available at \url{https://pytorch.org/docs/stable/_modules/torchvision/models/squeezenet.html}}}.


\subsection{Baselines}~\label{sec_evaluation_methods}
There are several approaches for training a DNN that provide us different trade-offs between utility, privacy, and complexity. We compare \oursys against the  following well-known approaches:

    \subsubsection{Non-Private Centralized Training~(\SOne).} Every \agent shares their data with the \server, such that a centralized dataset $\mathbb{D}$ is available for training. In this method the problem is reduced to architecture search and optimization procedure for training a DNN on $\mathbb{D}$. This method provides the best utility, a moderate complexity, but no privacy. It results in a single point of failure in the case of data breach. Moreover, in situations where \users are from different countries or private organizations, is almost impossible reaching an agreement on a single trusted curator. 
    
    \subsubsection{Centralized Training with DP~(\STwo).} When a trusted curator is available, one can trade some utility in the method~\SOne to guarantee central DP for the \users by using a differentially private training mechanism such as DPSGD~\cite{abadi2016deep}. The complexity of ~\STwo is similar to~\SOne, except for the fact that existing DP mechanisms make the training of DNNs slower and less accurate, as they require performing additional tasks, such as per-sample gradient clipping and noise addition. 
    
    \subsubsection{Non-Private FL~(\SThree).} Considering that a trusted curator is not realistic in many scenarios due to legal, security, and business constraints, \agents can collaborate via FL, instead of directly sharing their \users' sensitive data. In~\SThree, every \agent has its own dataset that may differ in size and quality. We assume the \server runs FedAvg~\cite{mcmahan2017communication}. While~\SThree introduces some serious communication complexities, it removes the need for trusted curator and enables an important layer of privacy protection. Basically, after sharing the data in the methods~\SOne and \STwo, \agents have no more control on the type and amount of information that can be extracted from their \users' data. However, in~\SThree, \agents do not share the original data, but some aggregated statistics or the results of a computation over the dataset. Even more, \agents can decide to stop releasing information at any time, thus having more control over the type and amount of information sharing. Although it seems that the utility of~\SThree might be lower than that of~\SOne, there are studies~\cite{bonawitz2019towards} showing that with a more sophisticated algorithms, one can achieve the same utility as~\SOne.
    
    \subsubsection{FL with Parallel DP~(\SFour).} Similarly to~\STwo, one can apply a DP mechanism to the local training procedure at the \agent side. Thus, every \agent locally provides a central DP guarantee. Notice that this is different form a local DP guarantee~\cite{dwork2014algorithmic}, as each \agent is a trusted curator. In~\SFour, a much better privacy can be provided than the previous methods. However, as the size of each local dataset $\mathbb{D}^i$ is much smaller than the size of the whole dataset $\mathbb{D}$, more noise should be added to achieve the same level of privacy, which results in a utility loss.

\subsection{Results}~\label{sec_evaluation_result}

\oursys proposes FL with a Central DP guarantee that is similar to~\SThree, except the \agents add noise to their gradients before running a secure aggregation procedure to aggregate the models at the \server. \oursys is similar to \SFour in the sense that the \agents add noise to their gradients, but thanks to secure aggregation, the variance of the noise \agents add is less than \SFour by an order of $K$. 

Figure~\ref{fig:dr_dataset_1} compares the classification accuracy~(top plot) and DP bound~(bottom plot) of all the introduced methods for SqueezeNet trained on the DR dataset. We show the mean and standard deviation of these 5 runs for each method. To be fair, for each method we tuned the hyper-parameters to achieve the best accuracy of that method. For FL scenarios, we have divided the dataset equally among 10 \agents~($K=10$) in an independent and identically distributed~(iid) manner, thus each \agent owns 293 images. For \SThree and  \SFour we perform FedAvg~\cite{mcmahan2017communication}, where each \agent performs 5 local epochs in between each global epoch, and at each round half of the \agents are randomly chosen to participate.  The details of the hyper-parameters used are provided at \url{https://github.com/ipc-lab/private-ml-for-health/tree/main/private_training}/

We see that \oursys's bound is very close to that of the centralized training counterpart \STwo, and even better than \STwo at early epochs. \oursys achieves a better classification accuracy than \SFour and more importantly much better record-level privacy guarantee. In \SFour, each hospital needs to add noise with a variance $K$ times larger to get the same privacy level of \STwo. In our experiments, we found that this amount of noise makes learning impossible. Finally, we see that \SThree, compared to \SOne, has a competitive performance, but there still is a considerable accuracy gap between \SThree and \oursys due to privacy protection provided. Such a gap can be mitigated if we could get access to a larger dataset and proportionally reduce the amount of noise without weakening the privacy protection.  

\begin{figure}[t!]
    \centering
    \begin{minipage}[t]{\linewidth}
    \includegraphics[width=\columnwidth]{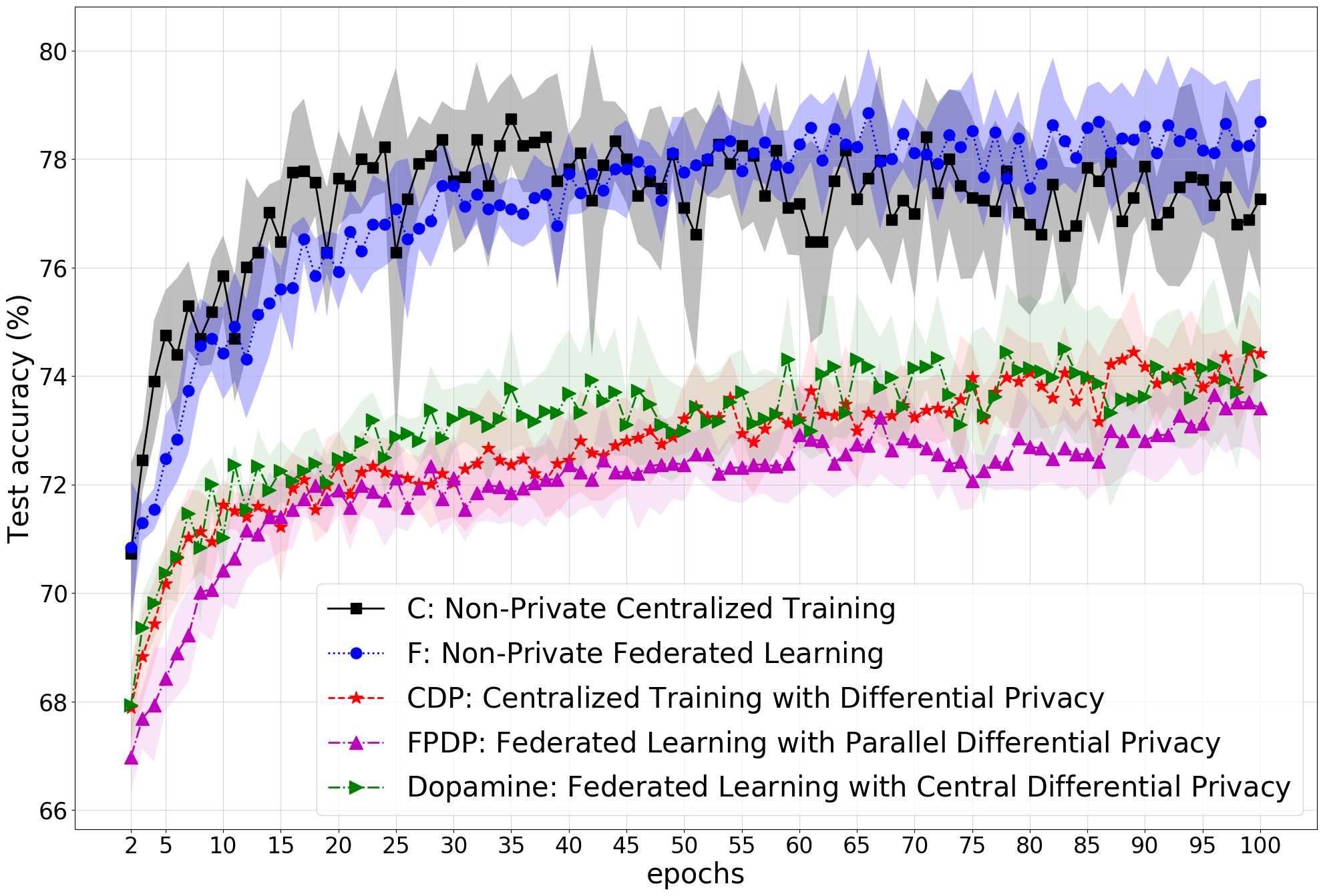}
    \end{minipage}
    \vfill
    \begin{minipage}[t]{\linewidth}
\includegraphics[width=\columnwidth]{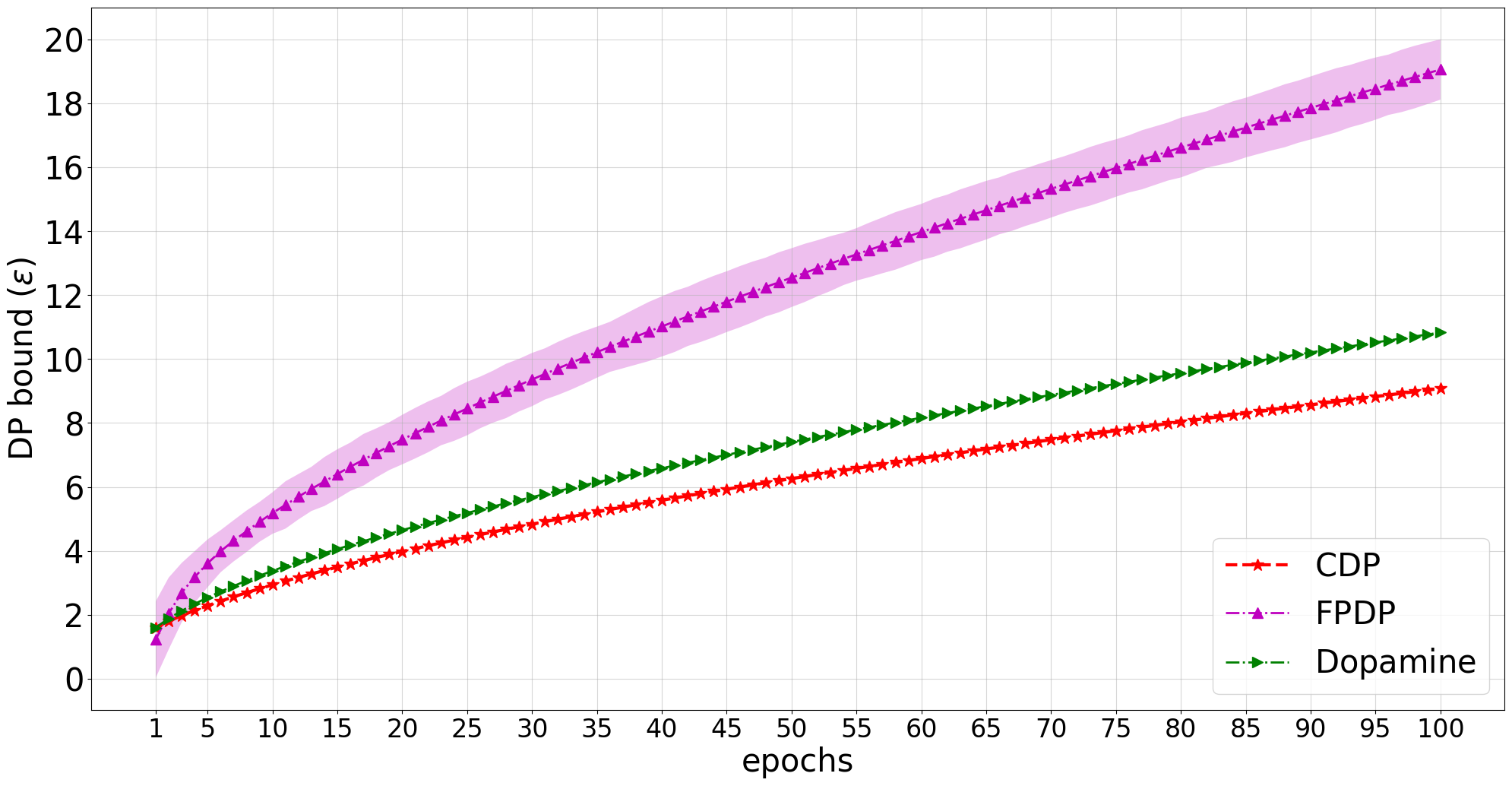}
    \end{minipage}
    \caption{Comparison of (top)~classification accuracy (bigger better) vs. (bottom)~DP bound (smaller better) on DR dataset. \oursys achieves a DP bound very close to the centralized training counterpart, while it also achieves a better classification accuracy than \SFour. For all DP methods, $\delta=10^{-4}$. For \oursys, each epoch is equal to one round (see Figure~\ref{fig:fig_dr_dataset_2} in Appendix for a more details)}  
    \label{fig:dr_dataset_1}
    \vspace{-10pt}
\end{figure}

\section{Conclusion and Future Work}\label{sec_conclusion}
We proposed \oursys, a system for collaboratively training DNNs on private medical images distributed across several \agents. Experiments on a benchmark dataset of DR images show that \oursys can outperform existing baselines by providing a better utility-privacy trade-off.

The important future directions are: (i) we aim to combine the proposed secure aggregation functionality~(discussed in Appendix~\ref{sec:secure_agg}) to the \oursys's training pipeline. (ii) Current privacy-analysis only allows us to have one local iteration. We observed that allowing local \agents to perform more than one iteration can improve the accuracy by about 3\%. We will carry out the privacy analysis for this case in the future. (iii) Existing DP libraries do not allow using arbitrary DNN architectures. They are either not compatible with more recent architectures, like EfficientNet, or they need too much resources. Making DP training faster is still an open area of research and engineering. (iv) Finally, we aim to evaluate \oursys on other medical datasets, and investigate novel methods for accuracy improvement while retaining the privacy protection provided.


\appendix

\section{Related Work}\label{sec_related_work}

\cite{sheller2018multi} apply FL, without any privacy guarantee,  for the segmentation of brain tumor images using a deep convolutional neural network, known as U-Net. They compare FL with a baseline collaborative learning scenario where \agents iteratively train a shared model in turn, and show that FL outperforms this baseline.

\cite{pfohl2019federated} employ FL with FedAvg and DPSGD algorithms to train  logistic regression and feedforward networks with one hidden layer for in-hospital mortality and prolonged length of stay prediction using the patients electronic health records. Comparing FL with a centralized approach, when DP is used in both cases, their experimental results show that, while centralized training can achieve a good DP bound with $\epsilon\approx 1$, FL with DP performs poorly in terms of both accuracy and $\epsilon$. 

Considering the differences in type of the device, number of sensors, and the placement of the sensors that is used in each \agent  for collecting electroencephalography~(EEG) data, \cite{gao2019hhhfl} propose an application of FL for training a classifier over heterogeneous data; however, they do not consider any formal privacy guarantee.  \cite{li2020multi}  use FL among \agents for training an fMRI image classifier and add some noise to the trained parameters before sharing them with the \server which does not follow the requirement of a valid DP mechanism, thus it does not provide a formal privacy guarantee.

\cite{choudhury2020anonymizing} offer a k-anonymity based privacy guarantee to generate a syntactic tabular dataset, instead of the original dataset, and use it for participating in FL. \cite{li2019privacy} propose an FL system for brain tumour segmentation in a method where each \agent owns a dataset of MRI scans. However, their implementation does not satisfy record-level DP, but parameter-level DP, which for neural networks is not a meaningful privacy protection.

\section{Background}\label{sec_background} 
\subsection{Differential Privacy (DP)}

When we want to publish the results of a computation  $\mathcal{F}(\mathbb{D})$, over a private dataset $\mathbb{D}$, DP helps us to bound the ability of an adversary in distinguishing whether any specific sample data was included in the dataset $\mathbb{D}$ or not. DP has a worst-case threat model, assuming that all-but-one can collude and adversaries can have access to any source of side-channel information. 

\begin{definition}[\textbf{Differential Privacy}]\label{def_dp}
Given $\epsilon, \delta \geq 0$, a mechanism (\ie algorithm) $\mathcal{M}$ satisfies $(\epsilon, \delta)-$differential privacy if for all pairs of neighboring datasets $\mathbb{D}$ and $\mathbb{D}'$ differing in only one sample, and for all $\mathbb{Y} \subset Range(\mathcal{M})$, we have
\begin{equation}\label{eq_def_dp}
\Pr\bigl(\mathcal{M}(\mathbb{D}) \in \mathbb{Y} \bigr) 
\leq e^{\epsilon} \Pr\bigl(\mathcal{M}(\mathbb{D}') \in \mathbb{Y}\bigr) + \delta,
\end{equation} 
where $\epsilon$ is the parameter that specifies the privacy loss~(\ie $e^\epsilon$), $\delta$ is the probability of failure in assuring the upper-bound on the privacy loss, and the probability distribution is over the internal randomness of the mechanism $\mathcal{M}$, holding the dataset fixed~{\cite{dwork2006calibrating, dwork2014algorithmic}}. When $\delta = 0$, it is called {\em pure} DP, which is stronger but less flexible in terms of the mechanism design.
\end{definition}

For approximating a deterministic function $\mathcal{F}$, such as the mean or sum, a typical example of $\mathcal{M}$ in Equation~(\ref{eq_def_dp}) is a zero-mean Laplacian or Gaussian noise addition mechanism~\cite{dwork2014algorithmic}, where the variance is chosen according to the {\em sensitivity} of $\mathcal{F}$ that is the maximum of the absolute distance $|\mathcal{F}(\mathbb{D}) - \mathcal{F}(\mathbb{D}')|$ among all pairs of neighboring datasets $\mathbb{D}$ and $\mathbb{D}'$~\cite{dwork2006calibrating}. In general, to be able to calculate the sensitivity of $\mathcal{F}$, we need to know, or to bound, the $Range(\mathcal{F})$; otherwise we cannot design a reliable DP mechanism.

\begin{definition}[\textbf{Gaussian Mechanism}]\label{gaussian_mech}
Gaussian mechanism adds a Gaussian noise to a function $\mathcal{F}(x)$ so that the response to be released is $(\epsilon,\delta)$-differentially private. It is defined as $$\mathcal{M}(x) = \mathcal{F}(x) + \mathcal{N}\Big(\mu=0,\sigma^2=\frac{2\ln{(1.25/\delta)}(\Delta_2 \mathcal{F})^2}{\epsilon^2}\Big),$$
where $\Delta_2\mathcal{F}$ is the $L_2$ sensitivity of $\mathcal{F}(x)$ to the neighboring datasets~\cite{dwork2014algorithmic}. 

\end{definition}

DP mechanisms are in two types: central DP and local DP. In central DP, we consider a model where there is a trusted \server, thus the \users original data is shared with that \server. The trusted \server then run $\mathcal{M}$ on the collected data before sharing the result with other parties. In local DP, every \user randomly transforms data at the \user's side without needing to trust the \server or other parties~\cite{Erlingsson2014, wang2017locally, erlingsson2019amplification}.  Thus, in terms of privacy guarantee, local DP ensures that the probability of discovering the true value of the user's shared data is limited to a mathematically defined upper-bound. However, central DP gives such an upper-bound for the probability of discovering whether a specific user has shared their data or not.

DP mechanisms are mostly used for computing aggregated statistics, such as the sum, mode, mean, most frequent, or histogram of the dataset~\cite{bittau2017prochlo, ding2017collecting, cormode2018privacy}. DP mechanisms can also be used for training machine learning models on sensitive datasets while providing DP guarantees for the people that are included in the training dataset~\cite{abadi2016deep, bonawitz2019towards, kifer2020guidelines}. 

\subsection{Federated Learning (FL)}
Many parameterized machine learning problems can be modeled as a {\em stochastic optimization problem}
\begin{equation}\label{fl_problem}
\min_{\mathbf{w}\in\mathbb{R}^{d}} \mathrel{\mathop:}= \mathds{E}_{\mathbb{D} \sim \mathcal{D}}\mathcal{L}(\mathbf{w}, \mathbb{D} ),
\end{equation}
where $\mathbf{w}$ denotes the model parameters,  $\mathbb{D}$ is a random dataset sampled from the true, but unknown, data distribution $\mathcal{D}$, and $\mathcal{L}$ is a task-specific empirical loss function.

The core idea of the FL is to solve the stochastic optimization problem in~(\ref{fl_problem}) in a distributed manner such that the participating users do not share their local dataset with each other but the model parameters to seek a consensus. Accordingly, the objective of $K$ users that collaborate to solve Equation~(\ref{fl_problem}) is reformulated as the sum of $K$ loss functions defined over different sample datasets, where each corresponds to a different user:
\begin{equation} \label{fl_problem_dist}
\min_{\mathbf{w}} \mathcal{L}(\mathbf{w})= \frac{1}{K}\sum^{K}_{k=1} {\mathds{E}_{\mathbb{D}_k \sim \mathcal{D}_{k}}\mathcal{L}(\mathbf{w},\mathbb{D}_k)},
\end{equation}
where $\mathcal{D}_{k}$ denotes the portion of the dataset allocated to user $k$. SGD is a common approach to solve machine learning problems that are represented in the form of Equation~(\ref{fl_problem}). FL considers the problem in Equation~(\ref{fl_problem_dist}), and aims to solve it using SGD in a decentralized manner.

\subsection{Homomorphic Encryption (HE)}

{\em Fully Homomorphic Encryption}~(FHE) allows computation of arbitrary functions $\mathcal{F}$ on encrypted data, thus enabling a plethora of applications like private computation offloading. Gentry~\cite{10.5555/1834954} was the first to show that FHE is possible with a method that includes the following steps: (i) construct a somewhat homomorphic encryption~(SHE) scheme that can evaluate functions of low degree, (ii) simplify the decryption circuit, and (iii) evaluate the decryption circuit on the encrypted ciphertexts homomorphically to obtain new ciphertexts with a reduced inherent noise. The third step is called bootstrapping, and this allows computations of functions of arbitrary degree. Here, the degree of a function is the number of operations that must be performed in sequence to compute the function. For example, to compute $x^2$, one multiplication is necessary, therefore its degree is $1$, while to compute $x^3$, 2 multiplications ($x*x$ and $x*x^2$) are necessary, and therefore its degree is $2$. 

The security of homomorphic encryption schemes is based on the learning with errors  problem~\cite{10.1145/1568318.1568324}, or its ring variant called ring learning with errors~\cite{10.1145/2535925}, the hardness of which can be shown to be equivalent to that of classical hard problems on ideal lattices, which are the basis for a lot of post-quantum cryptography. All FHE schemes add a {\em small noise} component during encryption of a message with a {\em public key}, which makes the decryption very hard unless one has access to a {\em secret key}. The decrypted message obtained consists of the message corrupted by a small additive noise, which can be removed if the noise is ``small enough''. Each computation on ciphertexts increases the noise, which ultimately grows large enough to fail the decryption. The bootstrapping approach is used to lower the noise in the ciphertext to a fixed level. Addition of ciphertexts increases the noise level very slowly, while multiplication of ciphertexts increases the noise very fast. When FHE is only used for aggregation, the bootstrapping step, which is a computationally expensive procedure, is not usually necessary since the noise never grows to a large enough size.

While machine learning models compute on floating point parameters, encryption schemes work only on fixed point parameters, or equivalently, integer parameters, which can be obtained by appropriately scaling and rounding fixed point values. Given an integer $u$ to encode an integer base $b$, a base-b expansion of $u$ is computed, and represented as a polynomial with the same coefficients. The expansion uses a `balanced' representation of integers modulo $b$ as the coefficients, that is, when $b$ is odd, the coefficients are chosen to lie in the range $\left[\frac{-(b-1)}{2},\frac{(b-1)}{2}\right]$, and when $b$ is even, the coefficients are chosen to lie in the range $\left[ \frac{-b}{2}, \frac{(b-1)}{2} \right]$, except if $b=2$, when the coefficients are chosen to lie in the set $\{0,1 \}$. For example, if $b=2$, the integer $26 = 2^4 + 2^3 + 2^1 $ is encoded as the polynomial $x^4 + x^3 + x$. If $b=3$, $26 = 3^3 - 3^0$ is encoded as the polynomial $x^3 - 1$. Decoding is done by computing the polynomial representation at $x=b$. To map an integer to a polynomial ring $R=\mathbb{Z}[x]/f(x)$, where $f(x)$ is a monic polynomial of degree $d$, the polynomial representation of the integer is viewed as being a plaintext polynomial in $R$. The base $b$ is called the \textit{plaintext modulus}. The most popular choice of $f(x)$ is $x^d + 1$, with $d=2^n$ called the \textit{polynomial modulus}. Let $q>1$ be a positive integer, then we denote the set of integers $[-\frac{q}{2}, \frac{q}{2}]$ by $\mathbb{Z}_q$, and the set of polynomials in $R$ with coefficients in $\mathbb{Z}_q$ by $R_q$. The integer $q$ is called the \textit{coefficient modulus}, and is usually set much larger than the plaintext modulus.

For multi-dimensional vectors, the single-instruction-multiple-data technique, also known as {\em batching} \cite{Dorz2016OntheflyHB}, encodes a given array of multiple integers into a single plaintext polynomial in $R_q$. A single computation instruction on such a plaintext polynomial is equivalent to simultaneously executing that instruction on all the integers in that array, thus speeding up the computation by many orders of magnitude. The plaintext modulus is assumed to be larger than any integer in the given array. The length of the array is assumed to be equal to the degree of the polynomial modulus $d$, and in case that its size is less than $d$, then it is padded with zeros to make its length equal to $d$. Lagrange interpolation is performed over the array of integers to obtain a polynomial of degree $d-1$, thus obtaining the plaintext polynomial in $R_q$ encoding $d$ integers.

\section{On the Effect of Momentums}\label{sec:momnt}
\
Consider the momentum term~(Algorithm~\ref{alg:oursys}, line 13) at round $t$. For user $k$, we have
\begin{equation}
\hat{g}_k^t=\tilde{g}_k^t + \beta\hat{g}_k^{t-1}=\sum_{i=0}^{t-1}\beta^i\tilde{g}_k^{t-i},
\end{equation}
 where $\tilde{g}_k^t$'s are the noisy gradients~(Algorithm~\ref{alg:oursys}, line 12).

After every round, $\hat{g}_k^t$'s are securely averaged at the \server, such that the averaged gradient at round $t$ becomes 
\begin{equation}
    \hat{g}^t = \frac{1}{K}\sum_{k=1}^K\hat{g}_k^t=\sum_{k=1}^K \sum_{i=0}^{t-1}\beta^i\tilde{g}_k^{t-i}=\frac{1}{K}\sum_{i=0}^{t-1}\beta^i \sum_{k=1}^K \tilde{g}_k^{t-i}.
\end{equation}
Since the term $\sum_{k=1}^K \tilde{g}_k^{t-i}$ is computed securely, and the noise of each $\tilde{g}_k^{t-i}$ adds up, as it is shown in Lemma~\ref{secure_agg_lemma}, $\forall i \in \{0, 1, \dots,t-1\}$, $\tilde{g}^{t-i}$ is $({\epsilon},\delta)$-DP against the \server and $({\epsilon}\sqrt{K/(K-1)},\delta)$-DP against other \agents, where $\hat{\epsilon} \leq \epsilon$ is the DP-bound at round $i$.
 
Thanks to the post-processing property~\cite{dwork2014algorithmic} of DP, use of the same noisy gradients repetitively does not incur any additional privacy costs. Thus, use of momentum in our setting does not incur additional privacy cost either.

\begin{figure*}[t!]
    \centering
    \includegraphics[width=.8\textwidth]{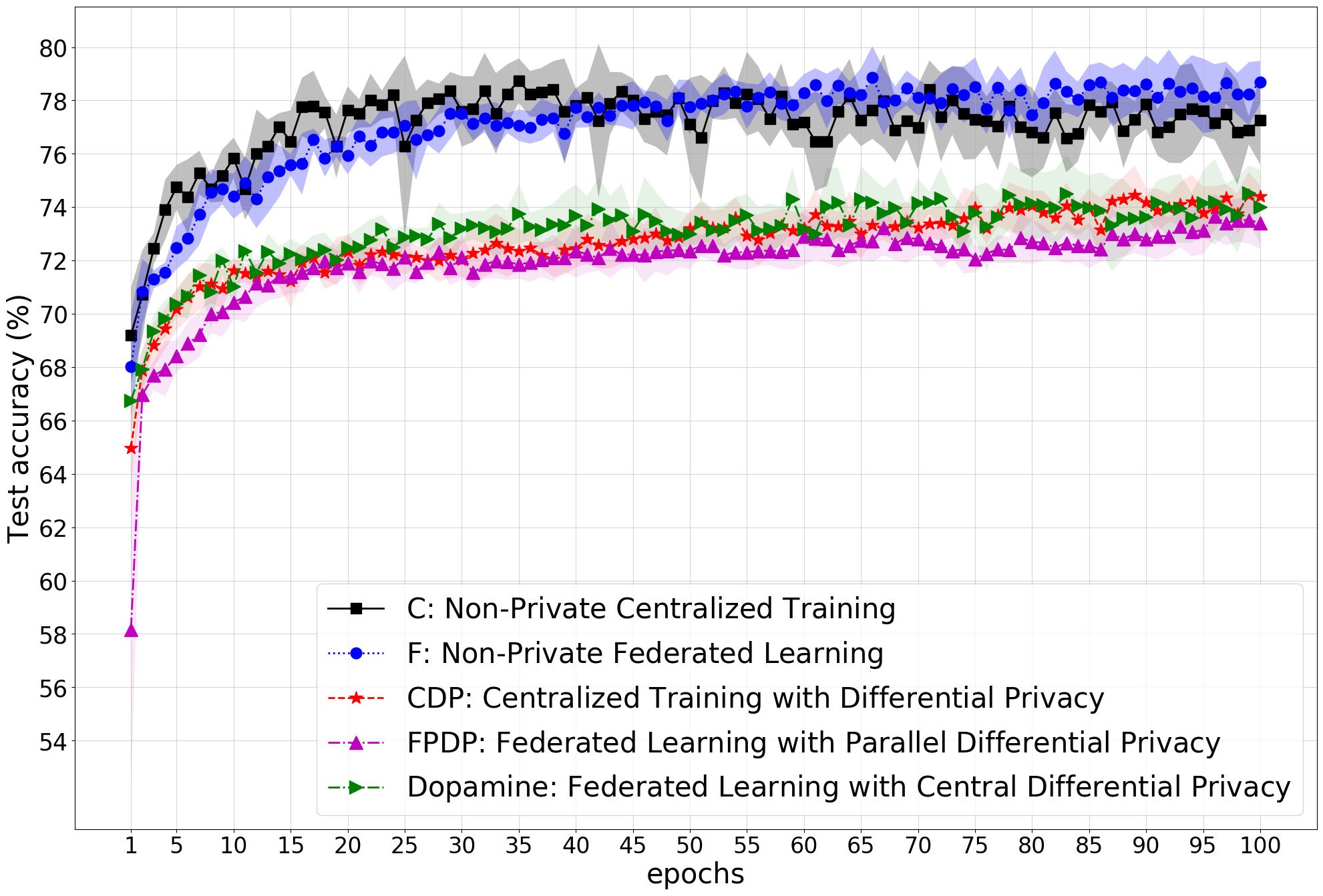}
    \caption{The top plot in Figure~\ref{fig:dr_dataset_1}, but starting from epoch 1.}
    \label{fig:fig_dr_dataset_2}
\end{figure*}

\section{Secure Aggregation}\label{sec:secure_agg}
Secure aggregation helps in providing a computational DP guarantee, even during the training procedure, so that the \agents do not need to trust the \server. We use the Brakerski-Fan-Vercauteren~(BFV) scheme~\cite{cryptoeprint:2012:144} for homomorphic encryption. For security parameter $\lambda$, random element $s\in R_q$, $\Delta = \lfloor \frac{q}{b} \rfloor$, and a discrete probability distribution $\mathbf{\chi} = \mathbf{\chi}(\lambda)$, the BFV scheme is implemented as follows:
\begin{itemize}
    \item \textbf{SecretKeyGen ($1^{\lambda}$):} sample $\mathbf{s}\leftarrow \mathbf{\chi}$, and output $\mathbf{sk} = \mathbf{s}$.
    \item \textbf{PublicKeyGen ($\mathbf{sk}$):} sample $\mathbf{a}\leftarrow R_q$, $e\leftarrow \mathbf{\chi}$, and output
    \begin{align}
        \mathbf{pk} = \left(\left[ -(\mathbf{as} + \mathbf{e}) \right]_q, \mathbf{a} \right).
    \end{align}
    \item \textbf{Encrypt($\mathbf{pk}, \mathbf{m}$):} To encrypt a message $\mathbf{m} \in R_b$, let $\mathbf{p}_0 = \mathbf{pk}[0]$ and $\mathbf{p}_1 = \mathbf{pk}[1]$, and sample $\mathbf{u}, \mathbf{e}_1$, $\mathbf{e}_2 \leftarrow {\chi}$, and output
    \begin{align}
        \mathbf{ct} = \Big( \Big[ \mathbf{p}_0\mathbf{u} + \mathbf{e}_1 + \Delta.\mathbf{m} \Big]_q,  \Big[ \mathbf{p}_1\mathbf{u} + \mathbf{e}_2 \Big]_q \big).
    \end{align}
    \item \textbf{Decrypt($\mathbf{sk}, \mathbf{ct}$):} Set $\mathbf{s}=\mathbf{sk}$, $\mathbf{c}_0=\mathbf{ct}[0]$, and $\mathbf{c}_1=\mathbf{ct}[1]$, and compute
    \begin{align}
        \Big[\Big[ \frac{b.[\mathbf{c}_0 + \mathbf{c}_1\mathbf{s}]_q}{q} \Big]\Big]_b.
    \end{align}
\end{itemize}

We use the open source PySEAL library \cite{titus2018pyseal}, which provides a python wrapper API for the original SEAL library published by Microsoft~\cite{sealcrypto}. We simulate the FL environment over the message passing interface~(MPI), with the rank $0$ process modeling the \server, and the rank $1,\ldots, K$ processes model the \agents that participate in the training. The plaintext modulus $b$ is chosen to be $40961$, while the polynomial modulus $d$ is chosen to be $x^{4096} + 1$. Therefore, each ciphertext can pack $d=4096$ integers with magnitude less than $40961$ for SIMD operations, thus speeding up the communication and computation on encrypted data by $4096$ times. The methodology is as follows:

\begin{enumerate}
    \item {Key generation and distribution:} A public key and secret key are generated by the rank 1 process (\agent 1). The public key is broadcasted to all the involved parties, while the secret key is shared with the \agents but not the \server.
    \item The \server initializes the model to be trained, and sends the model to the \agents in the first round. 
    \item Each \agent performs an iteration on its local model based on Algorithm~\ref{alg:oursys}, at the end of which each \agent has the updated model.
    \item Each active \agent:
    \begin{itemize}
        \item flattens the tensor of model parameters into a 1D array $\mathbf{w}$,
        \item converts the floating point values in the array to 3 digit integers by multiplying each value by $10^3$, and then rounding it to the nearest integer,
        \item partitions the array $\mathbf{w}$ into chunks of $d=4096$ parameters, denoted by $\mathbf{w}_1, \ldots, \mathbf{w}_l$, where $l=\lceil\frac{\text{length}(\mathbf{w})}{4096}\rceil$,
        \item encrypts the chunks $\mathbf{w}_1, \ldots, \mathbf{w}_l$ with the public key, into ciphertexts $\mathbf{ct}_1, \ldots, \mathbf{ct}_l$, respectively, using the batching technique, and sends the list of ciphertexts to the \server.
    \end{itemize}
    \item The  \server adds the received encrypted updates and sends the encrypted aggregated model to the \agents for the next round.
    \item The \agents decrypt the received aggregate array $\mathbf{w}$, scale and convert each integer value to a floating point value of correct precision, divide each value by the number of \agents that were active in the previous round, and finally reshape $\mathbf{w}$ back to the structure of the original model.
    \item Go to step 3.
\end{enumerate} 
At the end of the above procedure, the \agents send the final unencrypted updated models to the \server, which aggregates them and updates the global model for the final time to obtain the final trained model. Notice that we assume hospitals are non-malicious, otherwise one needs to use other techniques, \eg~\cite{corrigan2017prio}. However, methods such as~\cite{corrigan2017prio} use secure multi-party computation, and not homomorphic encryption. The method used in ~\cite{corrigan2017prio} requires multiple non-colluding aggregators (at least one honest aggregator). That is different from our setting, where we consider only one global aggregator. Another alternative is for the hospitals to send secret shares to the other hospitals acting as aggregators. However, that would require the hospitals to be able to communicate with each other, which is not the best solution in dynamic settings.

\end{document}